%% file: main-arxiv.tex
\documentclass{article}

\usepackage[left=1in, right=1in, top=1in, bottom=1in]{geometry}

\usepackage[utf8]{inputenc} 
\usepackage[T1]{fontenc}    
\usepackage{hyperref}       
\usepackage{url}            
\usepackage{booktabs}       
\usepackage{amsfonts}       
\usepackage{nicefrac}       
\usepackage{microtype}      
\usepackage{xcolor}         

\usepackage{microtype}
\usepackage{graphicx}
\usepackage{subcaption}
\usepackage{booktabs} 

\usepackage{hyperref}

\usepackage{amsmath}
\usepackage{amssymb}
\usepackage{mathtools}
\usepackage{amsthm}

\usepackage{thm-restate}
\usepackage{array}
\usepackage[capitalize,noabbrev]{cleveref}

\theoremstyle{plain}
\newtheorem{theorem}{Theorem}[section]
\newtheorem{proposition}[theorem]{Proposition}

\theoremstyle{definition}

\theoremstyle{remark}
\newtheorem{remark}[theorem]{Remark}

\usepackage[textsize=tiny]{todonotes}

\input{math_commands}

\NewDocumentCommand{\TT}{s m}{\textcolor{cyan}{\IfBooleanF{#1}{\textbf{Taos:}~}#2}}

\usepackage{multirow}

\usepackage{verbatim}
\usepackage{enumitem}

\usepackage{wrapfig}

\title{\Huge\bf Second Order Drifting Models}

\author{
Drake Brown*, Yuhao Huang*, Shih-Hsin Wang, Bao Wang \\
  Department of Mathematics\\
  Scientific Computing and Imaging (SCI) Institute\\
  University of Utah\\ \footnote{Drake Brown and Yuhao Huang are co-first authors.}
}

\begin{document}

\maketitle

\begin{abstract}
Drifting models are a recent class of one-step generative models that evolve the model distribution during training using a predefined sample-based drift field. Although they avoid iterative inference, their kernel-based drift fields induce frequency-dependent training dynamics: In the linearized regime, each Fourier mode of the density residual decays at a rate determined by the kernel spectrum, leading to slow recovery of fine-scale structure. We propose Second-Order Drifting Models, which lift drifting dynamics into phase space by augmenting generated samples with artificial velocity variables. We show that the resulting density perturbations obey accelerated second-order dynamics in Fourier space, connecting drifting models to the celebrated Nesterov acceleration from optimization theory. This provides a principled mechanism for mitigating the spectral stiffness of first-order drifting while preserving one-step inference. We derive a practical semi-implicit training algorithm and evaluate it on synthetic distribution matching, sequential data generation, and robotic control. Across these settings, the second-order drifting model improves convergence behavior and achieves competitive or superior performance over first-order drifting baselines. 
\end{abstract}

\section{Introduction}\label{sec:intro}






Flow-based generative models—particularly those based on diffusion (cf.~\cite{sohl2015deep,ho2020denoising,song2021scorebased}) and flow matching (cf.~\cite{lipman2023flow,albergo2023stochastic,liu2023flow,huang2026improving}) mechanisms—can be formulated as learning a transformation $f$ such that the pushforward of a source distribution $q$ matches the target data distribution $p$, i.e., $f_{\sharp} q \approx p$ \footnote{The map $f:\mathcal{X}\to\mathcal{X}$ that transports a source distribution $q$ to the target data distribution $p$. Concretely, if $X \sim q$ and $Y = f(X)$, then $Y$ follows the pushforward distribution $f_{\sharp} q$, defined by
\[
(f_{\sharp} q)(A) := q\bigl(f^{-1}(A)\bigr)
\quad \text{for all measurable sets } A \subseteq \mathcal{X}.
\]
The learning objective is to construct $f$ such that
\(f_{\sharp} q \approx p\).}. 
Despite their remarkable performance in high-fidelity generation \cite{rombach2022high,hoogeboom2022equivariant,watson2023novo,transue2025flow,jia2025plug}, these transformations are defined via differential equations and require iterative evaluations at inference time—often requires hundreds of neural function evaluations (NFEs) per sample. This makes generation computationally expensive and limits real-time applicability, motivating the development of high-quality single-step models, including consistency models \cite{song2023consistency,songimproved}, flow maps \cite{boffi2024flow,boffi2025build}, and Meanflows \cite{geng2025mean,huang2026rmflow}.

Recently, drifting models \cite{deng2026generatihve} have been proposed to address this inefficiency by shifting distribution matching entirely to training. In this framework, the neural network $f_\theta$ acts as a single-step generator, mapping a simple prior distribution $q$ (e.g., Gaussian noise) directly to samples in data space. At any training time $t$, the model induces a pushforward distribution $q_t = f_{\theta_t \sharp} q$. The goal of learning is to evolve $q_t$ so that it matches the target data distribution $p$.
Crucially, the model does not explicitly learn a vector field parameterized by a neural network. Instead, training is guided by a pre-designed drifting field $V_{p, q_t}(x)$, which depends on both the current generated distribution $q_t$ and the target distribution $p$. This field prescribes how a sample $x_t \sim q_t$ should move so as to reduce the discrepancy between $q_t$ and $p$. Intuitively, $V_{p,q_t}(x)$ can be viewed as a velocity that attracts generated samples toward regions of high data density while repelling them from over-represented regions of $q_t$.

During training, the neural network parameters evolve over iterations, $\theta_t \to \theta_{t+\Delta t}$, which induces a corresponding evolution of generated samples. For a fixed noise input $\epsilon \sim q$, the generated sample follows a trajectory $x_t = f_{\theta_t}(\epsilon)$. The parameter update thus results in a displacement
\[
x_{t+\Delta t} - x_t \approx V_{p,q_t}(x_t)\,\Delta t,
\]
meaning that the change in samples is driven by the drifting field. In this sense, the optimization of network parameters implicitly transports samples according to $V_{p,q_t}$.
From a continuous-time viewpoint \cite{turan2026generative}, taking the limit $\Delta t \to 0$ leads to the ordinary differential equation (ODE)
\begin{equation}\label{eq:drifting-ode-intro}
\frac{d x_t}{dt} = V_{p,q_t}(x_t).
\end{equation}
This ODE characterizes the training dynamics at the level of samples: the generated distribution $q_t$ evolves by flowing along the vector field $V_{p,q_t}$. At equilibrium, when $q_t = p$, the drifting field vanishes and the dynamics reach a fixed point. See Section~\ref{sec:spectral-viewpoint} for a more detailed derivation. 

The drifting field is constructed via kernel smoothing (cf.~Section~\ref{sec:background}), where interactions between samples are weighted by a similarity kernel (e.g., Gaussian), resulting in an attraction–repulsion mechanism that quantifies the discrepancy between the data distribution $p$ and the generated distribution $q$. However, this kernel-based smoothing inherently acts as a low-pass filter in the Fourier domain, attenuating high-frequency components of the distributions \cite{turan2026generative}. As a consequence, the resulting dynamics exhibit a pronounced spectral bias: while low-frequency modes converge rapidly, high-frequency modes decay much more slowly—exponentially so under a Gaussian kernel. A similar, albeit less severe, issue persists when using a Laplacian kernel. This spectral bias creates a fundamental bottleneck, slowing convergence and limiting the model’s ability to accurately capture fine-scale features of the data; see detailed analysis in Section~\ref{sec:spectral-viewpoint}. 

\subsection{Our Contributions}
Motivated by the spectral bottleneck described above, we propose \emph{second-order drifting models}, which lift the dynamics into phase space by introducing an auxiliary velocity variable $v$:
\begin{equation}\label{eq:second-order-drifting-model-intro}
\begin{cases} 
\frac{dx}{dt} = v, \\ 
\frac{dv}{dt} = V_{p,q_t}(x) - \gamma v,
\end{cases}
\end{equation}
where $\gamma > 0$ is a damping coefficient (can be $t$-dependent). By incorporating inertia/velocity, the resulting dynamics can effectively mitigate the low-pass bias induced by kernel smoothing. 

At the distribution level, the linearized dynamics follow second-order ODEs in Fourier space, turning each mode into an accelerated system analogous to the Nesterov \cite{nesterov1983method} dynamics. As a result, convergence is no longer constrained by the kernel spectrum $\lambda(\omega)$, but can be regulated via the damping schedule, effectively reducing spectral stiffness and accelerating high-frequency recovery. Guided by this insight, we develop a semi-implicit second-order training algorithm that preserves one-step generation while improving stability, convergence, and fine-scale fidelity. Empirically, our method achieves consistent improvements across synthetic generation, sequential data generation, and robotic control tasks. In summary, our contributions are threefold:
\begin{itemize}[leftmargin=5mm]
    \item \textbf{Acceleration perspective.} We view first-order drifting as spectrally imbalanced and introduce momentum to counter its low-pass bias.
    \item \textbf{Second-order dynamics.} We propose a phase-space formulation whose Fourier dynamics recover the Nesterov acceleration, reducing dependence on the kernel spectrum.
    \item \textbf{Algorithm and validation.} We develop a semi-implicit method that preserves one-step inference and improves convergence and high-frequency fidelity across tasks.
\end{itemize}

\subsection{Additional Related Works}
{\bf Acceleration and Second-Order Optimization Dynamics.}
Momentum-based acceleration has a long history in optimization (cf.~\cite{polyak1964some,nesterov1983method}). Continuous-time second-order ODE formulations of these methods have been extensively studied to characterize convergence rates and guide algorithm design (cf.~\cite{attouch2011continuous,su2016differential,attouch2018fast,attouch2016hessian,shi2019acceleration,wibisono2016variational,kovachki2021continuous}). Such perspectives have also inspired acceleration techniques for neural ODE-related models \cite{norcliffe2020second,nguyen2020momentumrnn,xia2021heavy,wang2022does,nguyen2022improving} and diffusion models~\cite{dockhorn2022cld}. In this paper, we bring this optimization viewpoint to drifting models: By linearizing the dynamics, each Fourier mode of the density perturbation behaves like a scalar optimization problem with conditioning governed by the kernel spectrum. This perspective enables the design of second-order 
dynamics 
to accelerate distribution matching.

\medskip
\noindent{\bf Advances in Drifting Models.} 
Several recent works have extended the drifting framework in complementary directions. We highlight a few representative advances here, noting that a comprehensive review is beyond the scope of this paper due to space limitations. The authors of \cite{turan2026generative} relate drifting model training to score matching, providing a theoretical foundation for our work. Sinkhorn drifting exploits connections to Sinkhorn gradient flows to improve performance in low-temperature regimes \cite{he2026sinkhorn}. A unified perspective relates drifting to score-based models by showing that the mean-shift direction corresponds to score mismatch, providing a coherent view across temperature regimes \cite{lai2026unified}. From a transport perspective, a long–short flow-map formulation derives drifting from flow-based models and achieves competitive performance with reduced batch sizes \cite{li2026long}. Gradient flow drifting interprets the method as a Wasserstein gradient flow of a kernel density estimation (KDE)-based divergence, enabling the design of alternative velocity fields that mitigate mode collapse \cite{cao2026gradient}. 
In addition, analytical correction methods address minibatch-induced bias in the drifting field through closed-form adjustments \cite{zhang2026analytical}, while friction-augmented variants introduce damping mechanisms to stabilize the dynamics and improve performance in domain translation tasks \cite{kazanskii2026attraction}.

\section{Background: Drifting Models}\label{sec:background}
In this section, we review the drifting model \cite{deng2026generatihve}. Let $p$ be a target distribution on $\mathbb{R}^d$, $f_\theta:\mathbb{R}^k \to \mathbb{R}^d$ a generator, and $q := (f_\theta)_\sharp \mathcal{N}(0,I)$ the induced model distribution, where $\gN(0,I)$ is standard Gaussian. The drifting model learns a kernel-based \emph{drift vector field} directly from samples.

For a positive kernel $k(x,y)$, a particular construction of the drift field \cite{deng2026generatihve} is given by
$$
V_{p,q}(x) = V_p(x) - V_q(x),
$$
with
$$
V_p(x) = \frac{\mathbb{E}_{y\sim p}[k(x,y)(y-x)]}{\mathbb{E}_{y\sim p}[k(x,y)] },\ 
V_q(x) = \frac{\mathbb{E}_{y\sim q}[k(x,y)(y-x)]}{\mathbb{E}_{y\sim q}[k(x,y)] }.
$$
The drifting field $V_{p,q}$ defines stop-gradient targets for generated samples, and $f_\theta$ (with initialization being $\theta_0$) is updated so that its pushforward distribution matches the drifted distribution following:
\begin{itemize}
    \item Sample $\epsilon\sim \mathcal{N}(0,I)$ and compute $x_t=f_{\theta_t}(\epsilon)$ 
    
    \item Drift $x_{t+\Delta t}=x_t+V_{p,q_t}(x_t)$

    \item Update $\theta_t$ by backpropagating the following loss function: 
    $$
    \gL(\theta)=\mathbb{E}_{\epsilon}[\|f_{\theta_t}(\epsilon)-\texttt{sg}[x_{t+\Delta t}] \|^2],
    $$
    where $\texttt{sg}$ denotes the stop-gradient operation.
\end{itemize}
After convergence, $f_\theta$ directly maps the prior distribution $\gN(0,I)$ to the data distribution $p$, enabling data generation with a single function evaluation (1-NFE).

\section{A Spectral Viewpoint of Training Drifting Models}
\label{sec:spectral-viewpoint}

In this section, we revisit the spectral analysis of drifting models \cite{turan2026generative}, demonstrating that first-order drifting causes frequency-dependent convergence and intrinsic spectral stiffness.

Starting from the ODE \eqref{eq:drifting-ode-intro}, and let $q_t(x)$ denote the density of the data, then we have \cite{turan2026generative}
\begin{equation}\label{eq:drift-pde}
\partial_tq_t(x) + \nabla\cdot \big( q_t(x)V_{p,q_t}(x) \big) = 0.
\end{equation}

\paragraph{Linearization.}
We consider linearizing \eqref{eq:drift-pde} for a general positive, radial symmetric kernel $k(x,y)=K(\|x-y\|)$ with $K$ being integrable and decaying in $\|x-y\|$; notice that both Gaussian and Laplacian kernels satisfy these conditions. To linearize \eqref{eq:drift-pde} around the equilibrium $p$, let $\rho_t:=q_t-p$.
Under mild assumptions (see Appendix~\ref{appendix-linearize}), we can show that $\rho_t$ satisfies the PDE:
\begin{equation}\label{eq:linearized-pde-residual}
\partial_t\rho_t(x) = -\nabla\cdot\Big(\int M(x-y)\rho_t(y)dy\Big) := - \nabla \cdot(M*\rho_t)(x),
\end{equation}
where $M(x)=cx K(x)$ with $c$ being a constant.

Taking the Fourier transform of \eqref{eq:linearized-pde-residual} yields a decoupled system:
\begin{equation}\label{eq:fourier-ode-rev}
\partial_t \hat{\rho}_t(\omega) = \lambda(\omega)\hat{\rho}_t(\omega), 
\quad 
\lambda(\omega) := \omega \cdot \nabla_\omega \hat{K}(\omega),
\end{equation}
where $\hat{K}(\omega)$ is the Fourier transform of $K$. Thus, each frequency evolves independently and the dynamics are fully characterized by the spectral rate $\lambda(\omega)$.

\begin{proposition}[Spectral stiffness of first-order drifting]
Under the above assumptions, the Fourier modes of the residual satisfy
\[
\partial_t \hat \rho_t(\omega) = \lambda(\omega)\hat \rho_t(\omega),
\quad \lambda(\omega)\le 0.
\]
Consequently, the time required to contract mode $\omega$ by a factor $\varepsilon$ is
\[
T_\varepsilon(\omega)=\frac{1}{|\lambda(\omega)|}\log\frac{1}{\varepsilon}.
\]
\end{proposition}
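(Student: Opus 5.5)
The argument has three steps. First we take the Fourier ODE as given by \eqref{eq:fourier-ode-rev}. Then we show $\lambda(\omega)\le 0$ from the radial, decreasing structure of $K$. Finally we integrate the scalar linear ODE to read off the contraction time.

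For the first step we start from \eqref{eq:linearized-pde-residual} with $M(x)=c\,xK(x)$. Under the convention $\hat f(\omega)=\int f(x)e^{-i\omega\cdot x}\,dx$, the transform of a divergence is $i\omega\cdot\widehat{(\cdot)}$. Multiplication by $x$ becomes $i\nabla_\omega$, so $\widehat{xK}(\omega)=i\nabla_\omega\hat K(\omega)$. The convolution theorem then gives
\[
\partial_t\hat\rho_t(\omega)=-\,i\omega\cdot\bigl(c\,i\nabla_\omega\hat K(\omega)\bigr)\hat\rho_t(\omega)=c\,\omega\cdot\nabla_\omega\hat K(\omega)\,\hat\rho_t(\omega).
\]
With the normalization of $c$ fixed in Appendix~\ref{appendix-linearize}, which we take to be $c=1$ after rescaling time, this is exactly \eqref{eq:fourier-ode-rev}. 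We will check that the sign of $c$ coming out of the linearization is positive, since the sign of $\lambda$ depends on it.

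The second step is the sign, which is the main obstacle. Because $K$ is radial, $\hat K(\omega)=\phi(|\omega|)$ is radial and real. This gives $\omega\cdot\nabla_\omega\hat K(\omega)=|\omega|\,\phi'(|\omega|)$, so it suffices that $\phi$ is nonincreasing in $r=|\omega|$. Mere monotone decay of $K$ does not imply this in general. I would therefore argue for the kernels the paper has in mind, and state the general condition explicitly.

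For the Gaussian, $\phi(r)\propto e^{-\sigma^2r^2/2}$, so $\lambda(\omega)=-\sigma^2|\omega|^2\phi(|\omega|)\le0$. For the Laplacian, $\phi(r)\propto(1+\sigma^2r^2)^{-(d+1)/2}$, which is again decreasing. In general, a radial $K$ that is a positive mixture of Gaussians (completely monotone in $\|x\|^2$) has $\hat K$ a positive mixture of Gaussians in $\omega$, hence radially nonincreasing. I would present this mixture argument as the "mild assumption" that covers both examples.

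The third step solves the scalar linear ODE: $\hat\rho_t(\omega)=e^{\lambda(\omega)t}\hat\rho_0(\omega)$. When $\lambda(\omega)<0$, we have $|\hat\rho_t(\omega)|\le\varepsilon|\hat\rho_0(\omega)|$ exactly when $e^{-|\lambda(\omega)|t}\le\varepsilon$, that is, when $t\ge T_\varepsilon(\omega)=|\lambda(\omega)|^{-1}\log(1/\varepsilon)$. In the degenerate case $\lambda(\omega)=0$, for example $\omega=0$, where mass conservation already forces $\hat\rho_t(0)=0$, we interpret $T_\varepsilon=\infty$.
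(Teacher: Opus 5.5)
Your argument is correct, and its skeleton matches the paper's proof of Proposition~\ref{prop:lambda-omega}: write $\hat K(\omega)=\phi(|\omega|)$, so that $\omega\cdot\nabla_\omega\hat K(\omega)=|\omega|\phi'(|\omega|)$, then integrate the scalar ODE. You differ genuinely in the one nontrivial step, $\phi'\le 0$, and your version is the one that holds.

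The paper only proves $|\hat K(\omega)|\le\hat K(0)$ and then asserts $\phi'(r)\le 0$. That does not follow from a global maximum at the origin. In fact the proposition is false under the stated hypotheses alone, exactly as you suspected. The one-dimensional triangle kernel $K(x)=\max(1-|x|,0)$ is even, integrable and decaying (adding a small Gaussian makes it strictly positive), yet $\hat K(\omega)=\bigl(\sin(\omega/2)/(\omega/2)\bigr)^2$ increases on part of $(2\pi,3\pi)$. There $\lambda(\omega)>0$, and those modes grow.

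Your hypothesis is that $K$ is completely monotone in $\|x\|^2$, i.e.\ $K=\int_0^\infty e^{-u\|x\|^2}\,d\mu(u)$ with $\mu\ge 0$. This is stronger than the paper's nominal assumption, but it is what makes the conclusion true. Each Gaussian component has a strictly radially decreasing transform, so you even get $\lambda(\omega)<0$ for $\omega\neq 0$. The class contains both kernels the paper uses: for the Laplacian, $s\mapsto e^{-\sqrt{s}/\sigma}$ is completely monotone, or you can use your explicit $(1+\sigma^2r^2)^{-(d+1)/2}$. In short, the paper's route has a broader-looking hypothesis but is not valid; yours gives up generality for a correct proof covering every kernel actually used.

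Your explicit derivation of \eqref{eq:fourier-ode-rev} also fills a step the paper only states. Your caution about the sign is warranted. The last display of Appendix~\ref{appendix-linearize} silently replaces $(y-x)$ by $(x-y)$ and ends with $+\nabla\cdot(M*\rho_t)$, which would flip the sign of $\lambda$. The minus sign in \eqref{eq:linearized-pde-residual}, with $c=p/D_p>0$, is correct, and it is what you used. One small bookkeeping point: rather than rescaling time to set $c=1$, which changes $T_\varepsilon$ by the factor $c$, it is cleaner to absorb $c>0$ into $\lambda$.
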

This proposition reveals a key limitation: convergence is controlled by the inverse spectral rate $|\lambda(\omega)|^{-1}$. Modes with small $|\lambda(\omega)|$ become bottlenecks, resulting in slow convergence.


\paragraph{Examples.}
We now examine the spectral rate $\lambda(\omega)$ for two standard kernels.

\begin{itemize}[leftmargin=5mm]
\item \textbf{Gaussian kernel.}  
For $K_G(z)=\exp(-\|z\|^2/(2\sigma^2))$, we obtain
\[
\lambda(\omega) = -\frac{\sigma^2}{(2\pi\sigma^2)^{d/2}} \|\omega\|^2 
\exp\Big(-\frac{\sigma^2}{2}\|\omega\|^2 \Big) < 0.
\]
In particular, $|\lambda(\omega)|$ decays exponentially in $\|\omega\|$, which implies
\[
T_\varepsilon(\omega) \propto \frac{\exp(\frac{\sigma^2}{2}\|\omega\|^2)}{\|\omega\|^2}\log\frac{1}{\varepsilon}.
\]
Thus, high-frequency modes converge exponentially slowly.

\item \textbf{Laplacian kernel.}  
For $K_L(z)=\exp(-\|z\|/\sigma)$, we obtain
\[
\lambda(\omega) = -\frac{C_d(d+1)\sigma^2\|\omega\|^2}{(1+\sigma^2\|\omega\|^2)^{(d+3)/2}} < 0,
\]
where $C_d$ is a dimension-dependent constant. This yields
\[
T_\varepsilon(\omega) \propto \frac{(1+\sigma^2\|\omega\|^2)^{(d+3)/2}}{\|\omega\|^2}\log\frac{1}{\varepsilon}.
\]
Here, the decay is polynomial, but still leads to severe spectral imbalance.
\end{itemize}

\paragraph{Interpretation (low-pass dynamics).}
Since $\hat{K}(\omega)$ decays with $\|\omega\|$, the spectral rate $\lambda(\omega)$ vanishes at high frequencies. As a result, the dynamics preferentially eliminate low-frequency errors, while high-frequency components persist for long times. In effect, the kernel induces a low-pass filter on the residual dynamics. This intrinsic spectral stiffness is a structural consequence of kernel smoothing, and explains why first-order drifting alone leads to slow convergence of fine-scale structure.


\section{Second-Order Drifting Models}
\label{sec:second-order-drifting-models}
From the previous analysis, the convergence of $\hat\rho_t(\omega)$ is frequency-dependent: for Gaussian kernels, it is exponentially slow at large $\omega$, while for Laplacian kernels it is slow at both high and low frequencies. For the Gaussian case, \cite{turan2026generative} mitigates this via an annealed bandwidth $\sigma(t)$. 
However, this approach is kernel-specific, raising the question:

\begin{center}
\emph{
Can we accelerate drifting model training for positive, radial, and translation-invariant kernels, ideally making iteration complexity frequency-independent?
}
\end{center}

To address this question, we reinterpret the linearized first-order dynamics from an optimization perspective. Since $\lambda(\omega)\leq 0$ (cf.~Proposition~\ref{prop:lambda-omega}), the Fourier residual evolves as
\[
    \hat{\rho}_t(\omega)=\hat{\rho}_0(\omega)\exp(\lambda(\omega)t).
\]
When $|\lambda(\omega)|$ is small, the corresponding mode decays slowly. Equivalently, for fixed $\omega$, \eqref{eq:fourier-ode-rev} can be viewed as the continuous-time limit of gradient descent on the convex quadratic objective
\[
    \min_{\hat{\rho}}-\frac{1}{2}(\lambda(\omega)) \hat{\rho}^2(\omega).
\]
This viewpoint motivates importing acceleration from optimization. Rather than evolving the density residual via a first-order ODE, we introduce second-order damped ODE. 

\subsection{Accelerated Second-Order Dynamics via Nesterov ODE}
To overcome the spectral stiffness of first-order drifting, we directly adopt a
time-dependent second-order dynamics inspired by Nesterov acceleration. In the
Fourier domain, we propose the evolution
\begin{equation}
\partial_t^2 \hat{\rho}_t(\omega) + \alpha_t \, \partial_t \hat{\rho}_t(\omega) - \lambda(\omega)\hat{\rho}_t(\omega) = 0,
\label{eq:nesterov_ode}
\end{equation}
where $\alpha_t = \frac{\alpha}{t}$ with $\alpha \geq 3$.

\Eqref{eq:nesterov_ode}
can be viewed as the continuous-time limit of Nesterov's
accelerated gradient method \cite{su2016differential}. Compared to the first-order ODE, which evolve each
frequency independently as $\partial_t \hat{\rho}_t(\omega) = \lambda(\omega)\hat{\rho}_t(\omega)$,
the second-order ODE introduces inertia, allowing faster decay of modes
with small $|\lambda(\omega)|$. In particular, classical results suggest that
such dynamics achieve accelerated convergence rates of order $\mathcal{O}(1/t^2)$
for convex objectives, significantly mitigating the dependence on the spectral
rate $\lambda(\omega)$.


\paragraph{Why constant damping is insufficient.} In optimization theory, another natural alternative is the constant-damping second-order system
\begin{equation}
\partial_t^2 \hat{\rho}_t(\omega) + \gamma \, \partial_t \hat{\rho}_t(\omega) - \lambda(\omega)\hat{\rho}_t(\omega) = 0,
\label{eq:heavy_ball}
\end{equation}
which corresponds to the heavy-ball method \cite{polyak1964some}. Its dynamical behavior depends on the roots of
the characteristic equation
\[
r^2 + \gamma r - \lambda(\omega) = 0.
\]
The dynamics exhibit three regimes depending on the relation between $\gamma^2$ and $-4\lambda(\omega)$:
\begin{itemize}[leftmargin=5mm]
    \item {\it Underdamped $\gamma^2<-4\lambda$:} the two roots are
    \(
        r=-\gamma/2\pm i\sqrt{|4\lambda+\gamma^2|}/2,
    \)
    and the solution is
    \[
    \hat{\rho}_t(\omega) = e^{-\frac{\gamma}{2}t}\big(C_1\cos(\sqrt{|4\lambda+\gamma^2|}/2t )+C_2\sin(\sqrt{|4\lambda+\gamma^2|}/2t) \big),\ C_1,C_2\ \text{are constants}.
    \]
    In this regime, the residual oscillates while its envelope decays at rate $e^{-\gamma t/2}$.

    \item {\it Critically-damped $\gamma^2=-4\lambda$:} the characteristic equation has one repeated real root $r=-\gamma/2$, and the solution is
    \[
    \hat{\rho}_t(\omega) = e^{-\frac{\gamma}{2}t}(C_1+C_2t),\ C_1,C_2\ \text{are constants}.
    \]
    This regime achieves the fastest non-oscillatory decay for a given mode.

    \item {\it Overdamped $\gamma^2>-4\lambda$:} the two roots are
    \(
        r_{1,2}=-\gamma/2\pm \sqrt{\gamma^2+4\lambda}/2,
    \)
    and the solution is
    \[
    \hat{\rho}_t(\omega) = C_1e^{r_1t} + C_2e^{r_2t},\ C_1,C_2\ \text{are constants}.
    \]
    In this case, the decay can become slow when the larger root is close to zero.
\end{itemize}
When $\gamma^2 \le -4\lambda(\omega)$, the mode $\hat{\rho}_t(\omega)$ decays at a rate governed by $\gamma$ rather than $|\lambda(\omega)|$, mitigating frequency dependence in the underdamped or critically damped regimes. However, no fixed $\gamma$ can critically damp all frequencies: modes with $\gamma^2 > -4\lambda(\omega)$ become overdamped and may converge more slowly. 
Therefore, constant damping cannot fully eliminate spectral imbalance. Designing more effective time-dependent damping schedules is an important yet challenging problem. Such designs could not only lead to new training algorithms for drifting models, but also provide new insights for acceleration in optimization. This remains a challenging direction for future research.

\subsection{Designing Second-Order Drifting Models}
In this subsection, we design a drifting model whose frequencies follow the second-order Nesterov ODE in the previous subsection.
We start from the following particle ODE governing the data evolution:
\begin{equation}\label{eq:nesterov-ode}
\begin{aligned}
\dot x^{(i)} &= v^{(i)},\\
\dot v^{(i)} &= V_{p,q_t}(x^{(i)}) - \frac{\alpha}{t}v^{(i)}.
\end{aligned}
\end{equation}
A stable semi-implicit discretization gives
\begin{equation}\label{eq:nesterov-particle}
\begin{aligned}
v^{(i)}_{n+1} &= \Big(1-\frac{\alpha}{n+1}\Big)v^{(i)}_n + \eta_1\cdot V_{p,q^n}(x^{(i)}_n),\\
x^{(i)}_{n+1} &= x^{(i)}_n + \eta_2\cdot v^{(i)}_{n+1},
\end{aligned}
\end{equation}
where $\eta_1,\eta_2>0$ are two constant. 

In training, the stop gradient is applied to $V_{p,q}$ exactly as in the original drifting model. In particular, let $x_N$ be the result of $N$ inertial drifting steps from $x_0=f_{\theta_t}(\epsilon)$, the loss is given by
\begin{equation}
\label{eq:nesterov-loss}
    \gL(\theta)=\mathbb{E}_{\epsilon}[\|f_{\theta_t}(\epsilon)-\texttt{sg}[x_{N}] \|^2],
\end{equation}



Next, we show that, under the second-order drifting dynamics in \eqref{eq:nesterov-particle}, each Fourier mode evolves according to the Nesterov ODE in \eqref{eq:nesterov_ode}.
From the particle dynamics \eqref{eq:nesterov-ode} and kinetic theory \cite{villani2002review}, we know the joint density of particle and velocity $Q_t(x,v)$ satisfies the PDE:
\begin{equation}\label{eq:phase-equation}
\partial_tQ_t + \nabla_x\cdot(vQ_t) + \nabla_v\cdot\Big(\Big[ V_{p,q_t}(x)-\frac{\alpha}{t}v \Big]Q_t \Big) = 0.
\end{equation}
Integrating \eqref{eq:phase-equation} over all $v$, we obtain the standard continuity equation (cf. Appendix~\ref{appendix-pde-second-order}):
\begin{equation}\label{eq:mass-conservation}
\partial_tq_t + \nabla_x\cdot(q_tu_t) = 0,
\end{equation}
where $q_t(x)=\int Q_t(x,v)dv$ and $u_t(x)=\frac{1}{q_t(x)}\int vQ_t(x,v)dv$.

Furthermore, multiply \eqref{eq:phase-equation} by $v$ and integrate over $v$, we arrive at the momentum equation (cf. Appendix~\ref{appendix-pde-second-order}):
\begin{equation}\label{eq:momentum-conservation}
\partial_t(q_tu_t) + \frac{\alpha}{t}(q_tu_t) = q_tV_{p,q_t}(x).
\end{equation}

Let $q_t = p+\rho_t$. Since $p$ is the equilibrium, its velocity $u$ and drift $V$ are zero. Thus, for small $\rho_t$, the terms $q_tu_t\approx pu_t$ and $q_tV_{p,q_t}\approx pV_{p,q_t}$. Moreover, we have already shown that the linearized drift field is
$V_{p,q_t}(x) \approx -(M*\rho_t)(x)$. 
Substituting these into \eqref{eq:mass-conservation} and \eqref{eq:momentum-conservation}, we have
$$
\begin{aligned}
\partial_t\rho_t + \nabla_x\cdot(p u_t) &= 0\\
\partial_t(pu_t) + \frac{\alpha}{t}(pu_t) &= -p(M*\rho_t).
\end{aligned}
$$
Take the time derivative of the linearized \eqref{eq:mass-conservation}:
$$
\partial_t^2\rho_t + \nabla_x\cdot\partial_t(pu_t) = 0.
$$
Substitute $\partial_t(pu_t)$ from \eqref{eq:momentum-conservation}:
$$
\partial_t^2\rho_t - \frac{\alpha}{t}\underbrace{ \nabla_x\cdot(pu_t)}_{-\partial_t\rho_t} - \nabla_x\cdot(p(M*\rho_t)) = 0.
$$
Using the substitution from the first derivative of $\rho_t$:
$$
\partial_t^2\rho_t + \frac{\alpha}{t}\partial_t\rho_t = \nabla\cdot(p(M*\rho_t)).
$$
Under the assumption of a locally flat prior $p$, 
we obtain the final form:
$$
\partial_t^2\rho_t(x) + \frac{\alpha}{t}\partial_t\rho_t(x) = \nabla\cdot(M*\rho_t)(x).
$$
Applying Fourier transform to the above equation, we have
$$
\partial_t^2\hat{\rho}_t(\omega) + \frac{\alpha}{t}\partial_t\hat{\rho}_t(\omega) = \lambda(\omega) \hat{\rho}_t(\omega).
$$

\section{Numerical Experiments}\label{sec:experiments}
In this section, we numerically verify the advantages of the proposed second-order models over other celebrated baseline models. All experiments were conducted on NVIDIA GeForce RTX 3090 GPUs. We use Laplacian kernel-based drift field for all the experiments.

\begin{wraptable}{r}{0.65\textwidth}
\centering
\begin{tabular}{lccc}
\toprule
& \multicolumn{2}{c}{Swiss roll} \\
\cmidrule{2-3}
Model 
& KL($\downarrow$)  & MMD ($\downarrow$)\\
\bottomrule\\[-0.5em]
Drifting 
& $.0127_{\pm .004}$ & $.000731_{\pm .00002}$ \\
Drifting $2^{\text{nd}}$ (\textbf{ours}) 
& $.0089_{\pm .003}$ &  $.000719_{\pm .00001}$\\
\bottomrule
\end{tabular}
\caption{ Comparison of KL divergence and MMD between the drifting and second-order drifting models on the 2D 
Swiss roll synthetic task. 
}
\label{tb:2d_kl_mmd}
\end{wraptable}

\subsection{Synthetic Tasks}
The Swiss roll is a smooth but highly folded manifold where high-frequency structure shows up through rapid changes in direction and curvature. Even though the density varies smoothly, the folding creates fine-grained geometric variation that requires tracking sharp changes in trajectory to follow the manifold correctly. This makes it a useful test for identifying whether a model can preserve high-frequency behavior. 
Additional experiment details are found in Appendix~\ref{appendix:experimental-synthetic}.

At each training iteration, the model generates $2048$ samples, from which we estimate the empirical density using histograms and compute the KL divergence to the data distribution. As shown in Figure~\ref{figure:swiss_roll_metrics} (a), our model exhibits better convergence in terms of KL divergence. At test time, we apply the same histogram-based density estimation procedure to the final models and report both KL divergence \cite{shlens2014notes} and maximum mean discrepancy (MMD) \cite{smola2006maximum} in Table~\ref{tb:2d_kl_mmd}, where our model consistently outperforms the baseline. Furthermore, we transform the estimated sample and data densities into the Fourier domain. Figure~\ref{figure:swiss_roll_metrics} (b) shows that our model achieves smaller Fourier coefficient errors, particularly at high-frequency modes.


\begin{figure}[!ht]
\centering
\begin{tabular}{c|cc}
\includegraphics[width=0.3\linewidth]{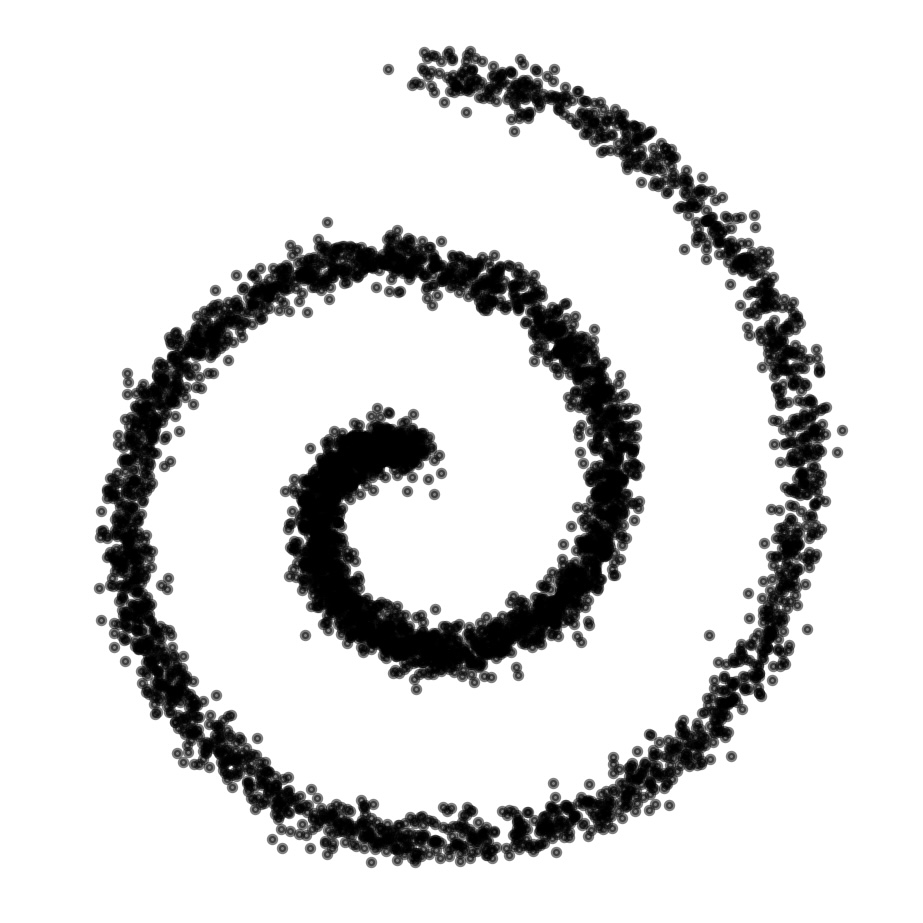}&  
\includegraphics[width=0.3\linewidth]{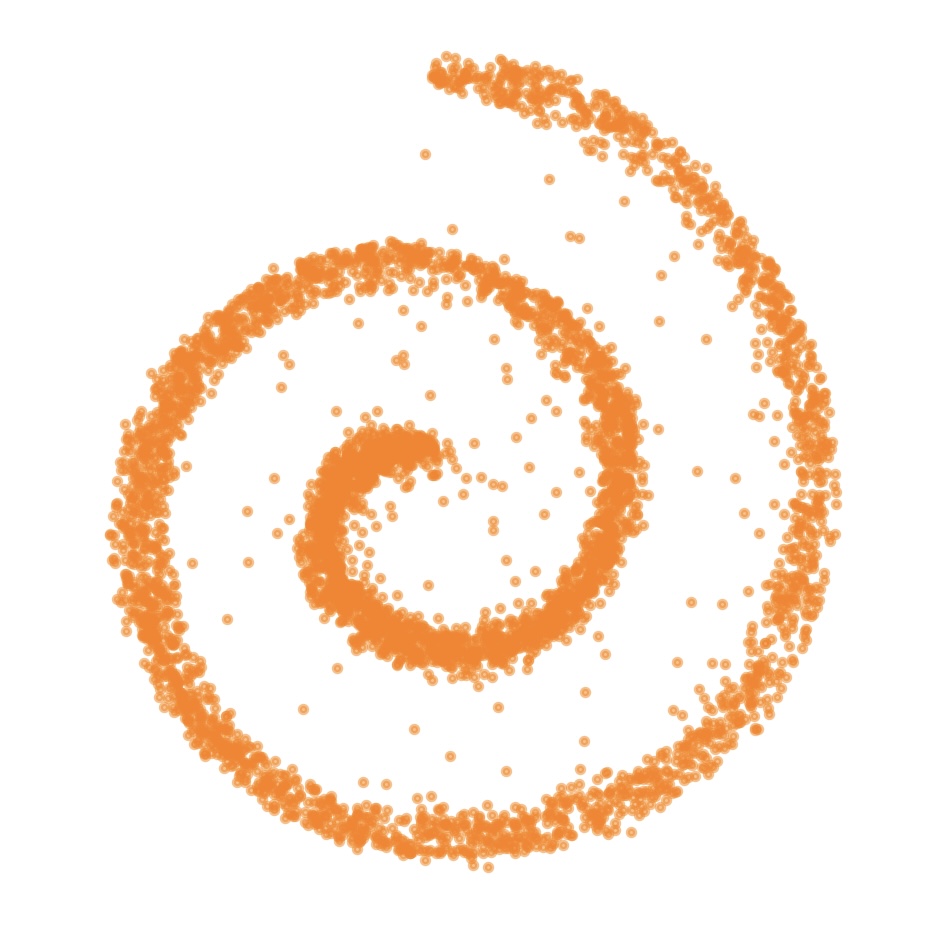}
&\includegraphics[width=0.3\linewidth]{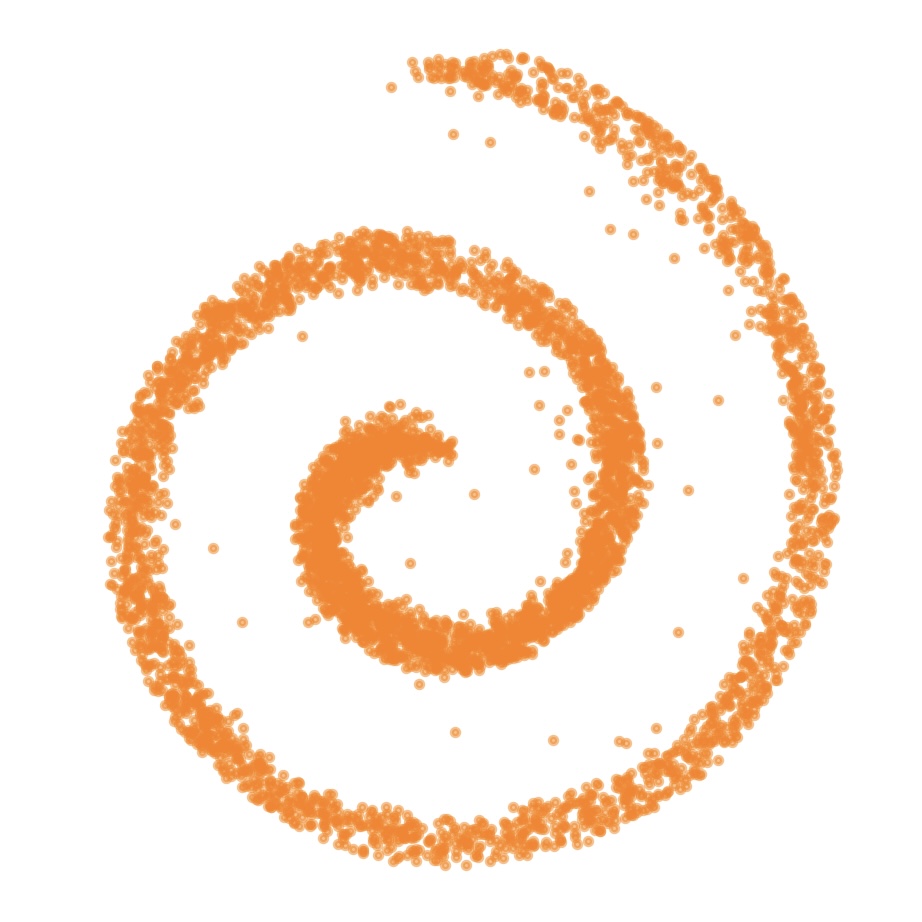}\\
{   (a)}& {   (b)} & {   (c)} 
\end{tabular}
\caption{ 
Swiss Roll experiments: (a) Ground truth (b) Drifting model samples. (v) Second-order drifting model samples. 
Our model improves frequency fidelity by reducing samples outside the \(\textit{Swiss\ roll}\) region.
}
\label{fig:swiss_roll}
\end{figure}

\begin{figure}[!ht]
\centering
\begin{tabular}{cc}
\includegraphics[width=0.45\linewidth]{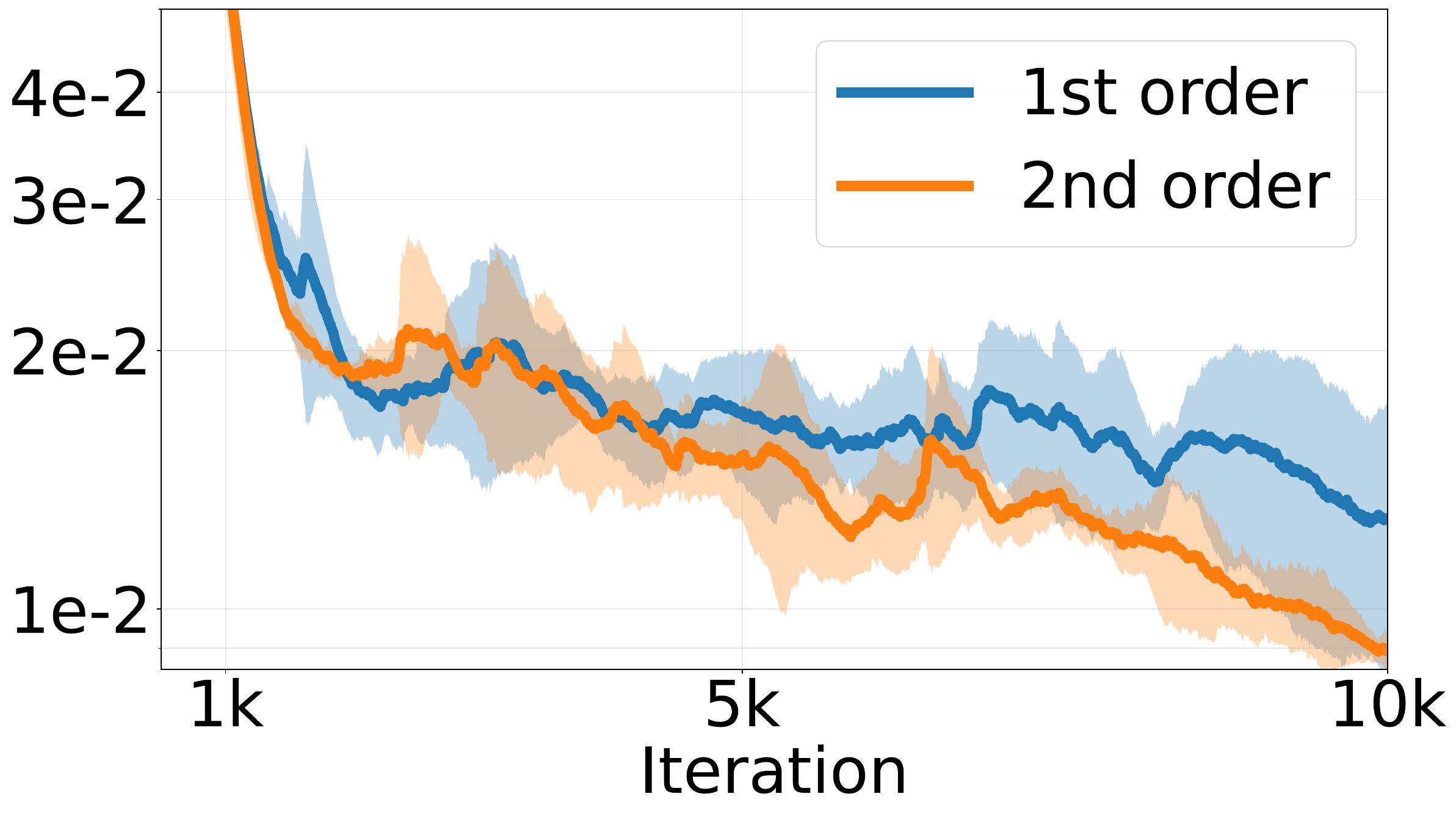}&  
\includegraphics[width=0.45\linewidth]{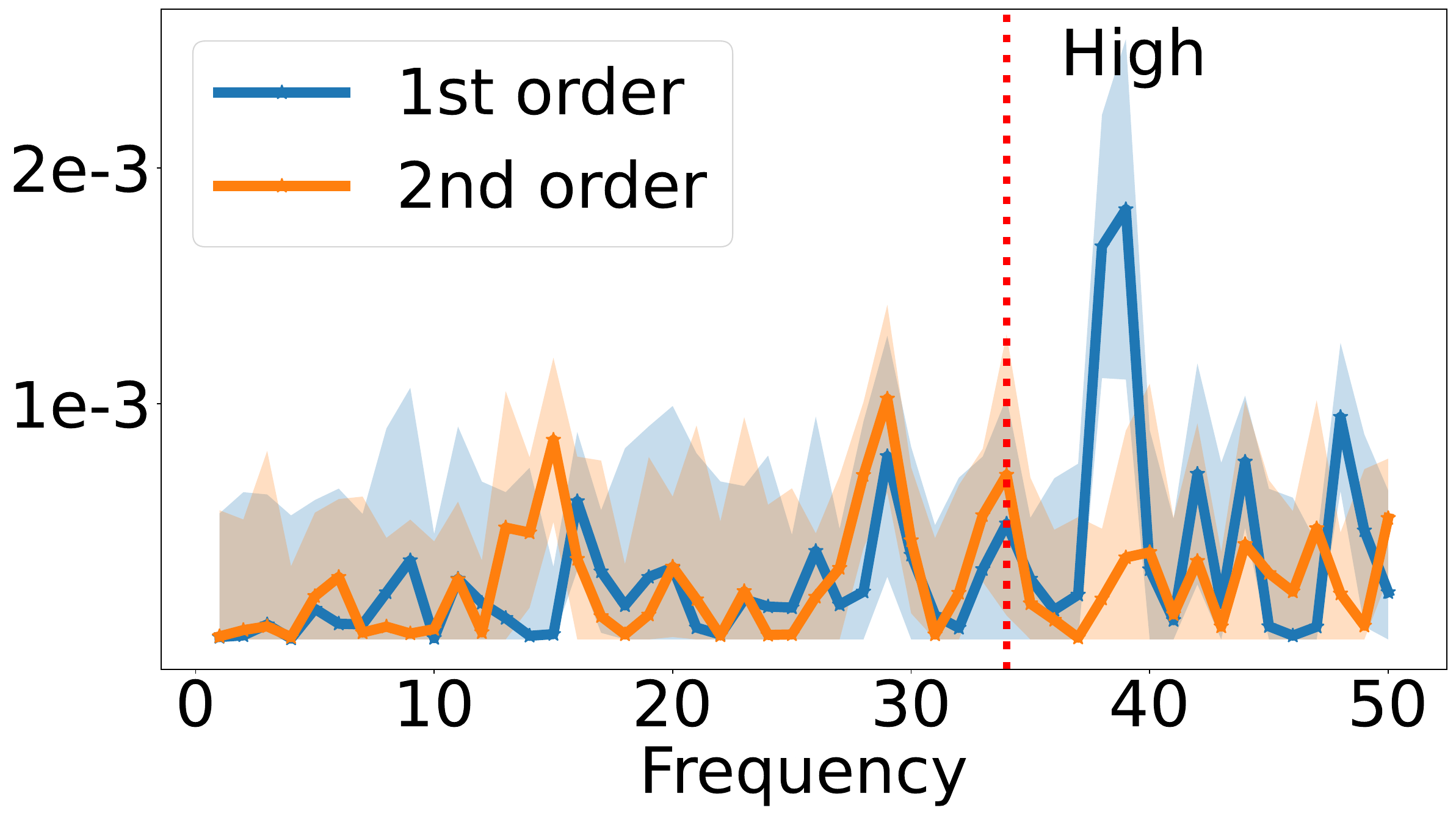} \\
{   (a)}& {   (b)}
\end{tabular}
\caption{ 
Swiss Roll experiments: (a) Running-average KL divergence vs. training iteration (5-run average). (b) Fourier coefficient error vs. frequency for the sampled distributions generated by the drifting model and our second-order drifting model, averaged over five runs. We estimate the empirical sample densities using histograms and compute the KL divergence and Fourier coefficient errors based on the resulting density estimates.
}
\label{figure:swiss_roll_metrics}
\end{figure}

\subsection{MNIST Image Generation}
\begin{wraptable}{r}{0.65\textwidth}
\centering
\begin{tabular}{lrrr}
\toprule
Model & FID($\downarrow$)  & NFE ($\downarrow$)\\
\midrule
Rectified Flow~\cite{guo2025variational} & $99.0_{\pm 3.0}$ & $2$ \\
Consistency Model\cite{guo2025variational} & $33.0_{\pm 5.0}$ & $2$ \\
VRFM\cite{guo2025variational} & $61.0_{\pm 7.0}$ & $2$ \\
\midrule
Drifting & $59.5_{\pm 3.0}$ & 1 \\
Drifting $2^{\text{nd}}$ (\textbf{ours}) & $\mathbf{48.2_{\pm 3.0}}$ & 1 \\
\bottomrule
\end{tabular}
\caption{
FID of different methods on MNIST dataset.
}
\label{tb:mnist_fid}
\end{wraptable}
We evaluate the drifting model and our second-order drifting model on MNIST image generation. MNIST consists of $28\times 28$ grayscale handwritten digit images and is a standard benchmark for evaluating generative models on low-resolution image distributions~\cite{lecun2002gradient}.

In this experiment, both the baseline drifting model and the proposed second-order drifting model are implemented with a Diffusion Transformer (DiT) backbone~\cite{peebles2023scalable}. At inference time, both models generate samples from Gaussian noise. We evaluate generation quality using the Fréchet Inception Distance (FID)~\cite{heusel2017gans}, where a lower FID indicates that the generated distribution is closer to the real MNIST distribution. The detailed experimental setup is in Appendix~\ref{appendix:experimental-mnist}.

\begin{figure}[!ht]
\centering
\begin{tabular}{cc}
\includegraphics[width=0.45\linewidth]{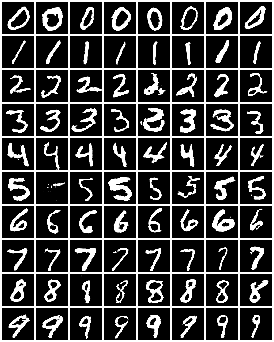}&  
\includegraphics[width=0.45\linewidth]{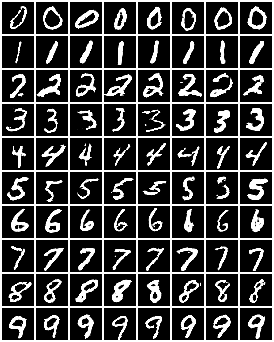}\\
\end{tabular}
\caption{
Samples generated by (left) drifting and (right) our second-order drifting models on MNIST dataset.
}\label{fig:mnist-example}
\end{figure}
Table~\ref{tb:mnist_fid} shows that the second-order drifting model consistently improves over the original drifting model in terms of FID. Figure~\ref{fig:mnist-example} shows some randomly generated samples by the baseline and our proposed second-order drifting models.
We further compare the drifting-based models with small-NFE flow-based generative models, including rectified flow~\cite{liu2023flow}, consistency models~\cite{song2023consistency}, and variational rectified flow (VRFM)~\cite{guo2025variational}.

\subsection{Sequential Data Generation: Dynamical Systems}
We further evaluate our proposed second-order drifting model on dynamical system trajectory generation tasks. Sampling trajectories in dynamical systems is a fundamental problem for forecasting and understanding complex time-dependent phenomena, such as chaotic dynamics, biological oscillations, and extreme events~\cite{perkins2013measurement,mosavi2018flood}. Recent works~\cite{finzi2023user,huang2026improving} have studied diffusion, flow-matching-based  and 1-NFE models for event-guided dynamical system trajectory sampling.

In this experiment, we formulate trajectory generation for the Lorenz~\cite{lorenz1963deterministic} and FitzHugh--Nagumo~\cite{fitzhugh1961impulses} systems as a time-series generation problem by discretizing the continuous time variable $t$ on a uniform grid, following the experimental setup in~\cite{finzi2023user,huang2026improving}. Each trajectory is represented as a sequence of states concatenated into
$\vx_{\rm data} = [\vx(\tau_m)]_{m=1}^M \in \mathbb{R}^{Md}$,
where $M$ denotes the total number of time steps, $d$ is the system dimension, and $\vx(\tau_m)\in\mathbb{R}^d$ is the system state at discretized time $\tau_m$. Given this representation, our goal is to learn a generative model that can produce realistic dynamical trajectories $\vx_{\rm data}$. For event-guided trajectory generation, where events are defined by a constraint set
$E=\{\vx_{\rm data}\mid C(\vx_{\rm data})>0\}$. Figure~\ref{tab:fig_events} illustrates the event structures for the Lorenz and FitzHugh--Nagumo systems. We condition the model on event guidance by using the corresponding class label as an additional input without relying on Tweedie's formula as in prior diffusion-based approaches~\cite{finzi2023user,huang2026improving}.

\begin{figure}[!ht]
\centering
\begin{tabular}{cc}
\includegraphics[width=0.45\linewidth]{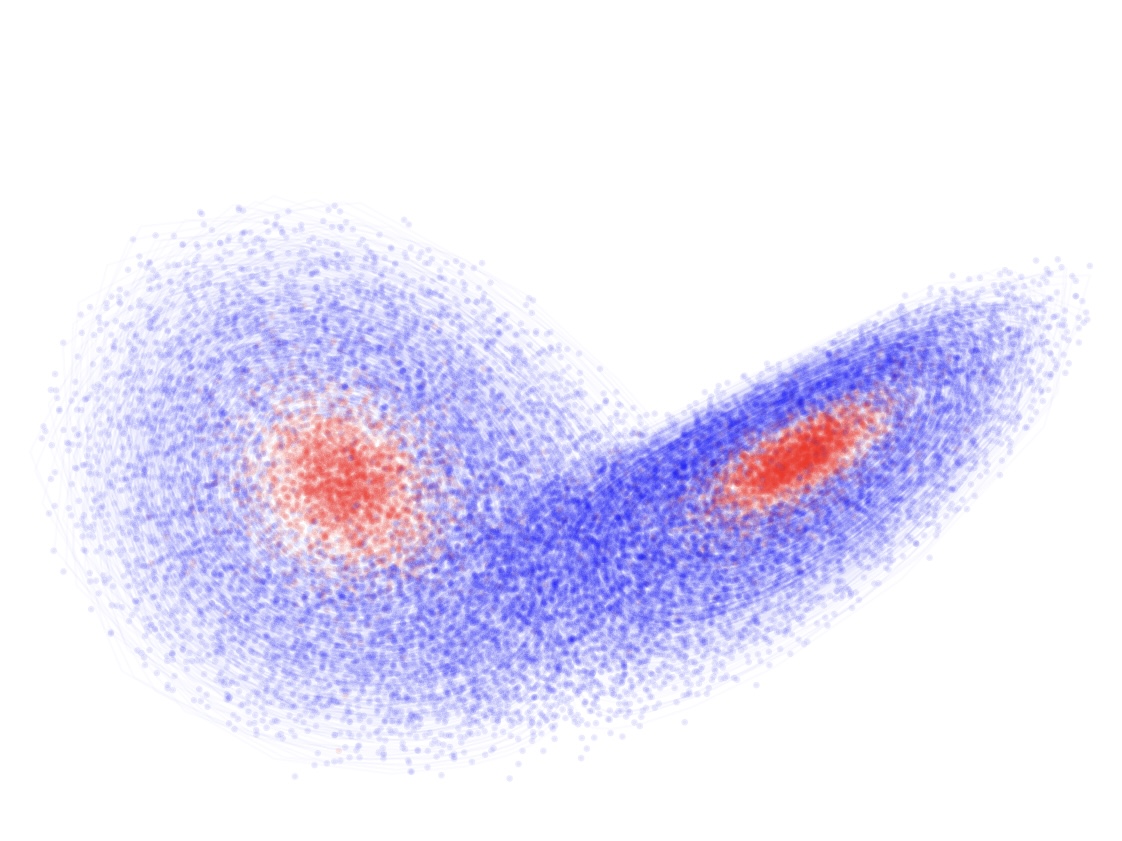}& \includegraphics[width=0.45\linewidth]{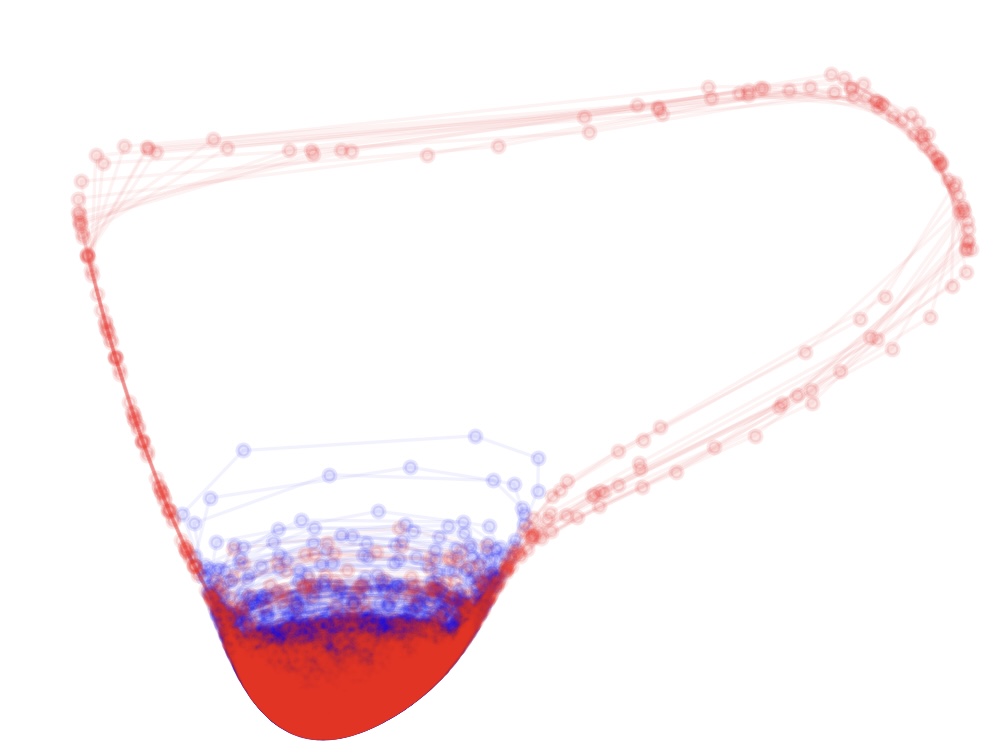}\\
{  (a)} & {   (b)}\\
\end{tabular}
\caption{ Events Illustration:
(a) Lorenz system trajectories; red paths remain in one attractor arm, corresponding to the condition $C(\vx)\geq 0$.
(b) FitzHugh–Nagumo trajectories; neuron spike events are shown in red, defined by $C(\vx)\geq 0$.
}
\label{tab:fig_events}
\end{figure}
We implement both the original drifting model and our second-order drifting model for this task Following prior work~\cite{finzi2023user,huang2026improving}, we use a U-Net backbone~\cite{finzi2023user} to model the trajectory distribution. Model performance is evaluated using the KL divergence between the histogram-based density estimates of the event-value distributions $C(\vx)$ for the sampled and dataset trajectories. The detailed experimental setup is in Appendix~\ref{appendix:experimental-dynamcis}.

Table~\ref{tb:dynamic_kl} shows that our second-order drifting model consistently improves trajectory generation quality over the original drifting model. In particular, the second-order update leads to lower distributional discrepancy under KL divergence, demonstrating its advantage in modeling complex dynamical system trajectories.



\begin{table}[!ht]
\centering
\begin{tabular}{lccccc}
\toprule
& \multicolumn{2}{c}{Lorenz} & \multicolumn{2}{c}{FitzHugh-Nagumo} \\
\cmidrule{2-5}
Model & w/o $E$ ($\downarrow$) &  w/ $E$ ($\downarrow$) &  w/o $E$ ($\downarrow$) &  w/ $E$ ($\downarrow$) & NFE ($\downarrow$)\\
\midrule
Diffusion~\cite{huang2026improving} & $0.0056$ & $0.2774$ & $0.0260$ & $0.3011$ & $128$ \\
FM~\cite{huang2026improving} & $0.0081$ & $0.2560$ & $0.0280$ & $0.3468$ & $128$ \\
\midrule
Drifting & $0.0101_{\pm .0004}$ & $0.3659_{\pm .002}$ & $0.0328_{\pm .0005}$ & $0.4115_{\pm .003}$ & 1 \\
Drifting $2^{\text{nd}}$ (\textbf{ours}) & $0.0089_{\pm .0003}$ & $0.3118_{\pm .003}$ & $0.0292_{\pm .0007}$ & $0.3788_{\pm .005}$ & 1 \\
\bottomrule
\end{tabular}
\caption{
KL divergence between the event $C(\vx)$ distributions of the generated trajectories and the dataset trajectories, estimated from histogram-based density approximations, with/without conditioning on the event.
}
\label{tb:dynamic_kl}
\end{table}

\subsection{Robotic Control}


Finally we consider the robotics tasks from~\cite{deng2026generatihve, chi2024diffusionpolicy}. Robotics manipulation is a sequential decision-making problem characterized by long-horizon dependencies, contact-rich interactions, and multi-modal action distributions. In such settings, control policies must maintain temporal consistency while adapting rapidly to changing object dynamics and task constraints.

We evaluate on a diverse suite of robotic manipulation benchmarks including Can, Tool Hang, Lift, and PushT, as well as multi-stage tasks such as BlockPush and Kitchen following the evaluation protocol of Drifting Policy~\cite{deng2026generatihve, chi2024diffusionpolicy}. 
Following a similar experimental setup as that in \cite{deng2026generatihve}, we replace the diffusion-based action generator with our second-order drifting model while retaining the same state-action representation and training pipeline. 
The detailed experimental setup is in Appendix~\ref{appendix:robotics}.
Table~\ref{table:robotics-results} shows that we achieve comparable or higher success rates across tasks while requiring substantially fewer training epochs. We compare our results with previous state-of-the-art models namely Diffusion Policy \cite{chi2024diffusionpolicy} and Drifting policy \cite{deng2026generatihve}. Specifically, second-order updates lead to significantly increased training efficiency on robotics tasks. 

We further study different initializations for the initial velocity $v^{(i)}_0$, including learned initialization, zero initialization, and historical warm-starting. We observe that zero initialization ($v_0^{(i)}=0$) provides the best trade-off between performance and efficiency, while historical warm-starting leads to instability due to non-stationary optimization dynamics. Learned initializations achieved the best success rates, but doubled the training time. Further details are provided in Appendix~\ref{appendix:robot-ablations}.

\begin{table}[!ht]
\centering
\begin{tabular}{l l c c c c c c}
\toprule
& & \multicolumn{2}{c}{Diffusion Policy} 
& \multicolumn{2}{c}{Drifting} 
& \multicolumn{2}{c}{Drifting 2nd (\textbf{ours})} \\
\cmidrule(lr){3-4} \cmidrule(lr){5-6} \cmidrule(lr){7-8}
Task & Setting 
& Success ($\uparrow$) & Epochs ($\downarrow$)
& Success ($\uparrow$) & Epochs ($\downarrow$)
& Success ($\uparrow$) & Epochs ($\downarrow$) \\
\midrule

\multirow{2}{*}{Can}       
& Visual & .97 & 3050 & \textbf{.99} & 50 & $\mathbf{.99_{\pm 0.016}}$ & \textbf{16} \\
& State  & .96 & 5000 & .98 & 50 & $\mathbf{.992_{\pm 0.01}}$ & \textbf{20} \\
\cline{1-8}

\multirow{2}{*}{Tool Hang} 
& Visual & .77 & 3050 & .67 & 25 & $\mathbf{.79_{\pm .051}}$ & \textbf{16} \\
& State  & .30 & 5000 & .38 & 50 & $\mathbf{.71_{\pm .06}}$ & \textbf{30} \\
\cline{1-8}

\multirow{2}{*}{Lift}      
& Visual & \textbf{1.0} & 3050 & \textbf{1.0} & 50 & $\mathbf{1.0_{\pm 0.0}}$ & \textbf{8} \\
& State  & .98 & 5000 & \textbf{1.0} & 50 & $\mathbf{1.0_{\pm 0.0}}$ & \textbf{25} \\
\cline{1-8}

\multirow{2}{*}{PushT}     
& Visual & .84 & 3050 & .86 & \textbf{100} & $\textbf{.87}_{\pm .018}$ & \textbf{100} \\
& State  & \textbf{.91} & 5000 & .86 & 50 & $.872_{\pm .023}$ & \textbf{30} \\
\cline{1-8}

\multirow{2}{*}{BlockPush} 
& Phase 1 & 0.36 & 5000 & \textbf{0.56} & 5000 & $.372_{\pm .061}$ & \textbf{3500} \\
& Phase 2 & 0.11 & 5000 & 0.16 & 5000 & $\mathbf{.168_{\pm .041}}$ & \textbf{3500} \\
\cline{1-8}

\multirow{4}{*}{Kitchen}   
& Phase 1 & \textbf{1.00} & 5000 & \textbf{1.00} & 300 & $\mathbf{1.00_{\pm 0.0}}$ & \textbf{100} \\
& Phase 2 & \textbf{1.00} & 5000 & \textbf{1.00} & 300 & $\mathbf{1.00_{\pm 0.0}}$ & \textbf{100} \\
& Phase 3 & \textbf{1.00} & 5000 & 0.99 & 300 & $\mathbf{1.00_{\pm 0.0}}$ & \textbf{100} \\
& Phase 4 & \textbf{0.99} & 5000 & 0.96 & 300 & ${.97_{\pm .015}}$ & \textbf{100} \\
\bottomrule
\end{tabular}
\caption{ Success rates (↑) and training epochs (↓) across tasks for diffusion policy and drifting variants.}
\label{table:robotics-results}
\end{table}


\section{Conclusion}\label{sec:conclusion}
We proposed second-order drifting models, a momentum-based extension of drifting that mitigates the spectral bias of kernel-driven dynamics. By lifting the evolution into phase space, we showed that the linearized dynamics follow accelerated second-order systems, reducing the dependence of convergence rates on the kernel spectrum. This provides a principled acceleration mechanism consistent with the interpretation of drifting as score-based gradient flow. We further introduced a semi-implicit training scheme that preserves one-step inference while improving convergence and stability. Experiments across multiple domains demonstrate consistent gains over first-order drifting. Overall, second-order drifting offers a simple and effective approach to accelerating one-step generative modeling. 
As a limitation, the current formulation is primarily designed to accelerate the training dynamics of drifting models, and its applicability to broader classes of generative frameworks remains an important direction for future work.


\appendix

\section{Linearized PDE}\label{appendix-linearize}
To derive the linearized PDE for $\rho_t = q_t-p$, we start from the continuous-limit evolution of the drifting model and linearize it around the equilibrium distribution $p$. The evolution of the model density $q_t(x)$ is governed by: 
\begin{equation}\label{eq:appendix-1}
\partial_tq_t(x) + \nabla\cdot\big( q_t(x)V_{p,q_t}(x) \big) = 0,
\end{equation}
where the drift field is defined as:
$$
V_{p,q_t} = V_p(x) - V_{q_t}(x),
$$
with $V_p(x) = \frac{\mathbb{E}_{y\sim p}[k(x,y)(y-x)] }{\mathbb{E}_{y\sim p}[k(x,y)] }$ and $V_{q_t}$ is defined similarly. 

Substituting $q_t=p+\rho_t$ into \eqref{eq:appendix-1}:
\begin{itemize}
    \item Time derivative: $\partial_t(p+\rho_t) = \partial_t \rho_t$ since $p$ is stationary.

    \item Drift at equilibrium: at equilibrium ($q_t=p$), the drift field vanishes, i.e., $V_{p,p}(x)=V_p(x)-V_p(x)=0$. 

    \item Expansion of $V_{q_t}$: we expand $V_{q_t}(x)$ to the first order in $\rho_t$. Let $D_\rho(x)=\int k(x,y)p(y)dy$ be the normalization factor. Then
    $$
    V_{p+\rho_t}(x) \approx V_p(x) + \frac{1}{D_p(x)}\int k(x,y)[(y-x)-V_p(x)]\rho_t(y) dy.
    $$
    Thus, the relative drift field $V_{p,q_t}$ becomes:
    $$
    V_{p,p+\rho_t}(x) = V_p(x) - V_{p+\rho_t}(x) \approx -\frac{1}{D_p(x)}\int k(x,y)[(y-x)-V_p(x) ]\rho_t(y)dy.
    $$
\end{itemize}

Substitute the above linearized drift back into the continuity equation and keep only terms linear in $\rho_t$:
\begin{equation}\label{eq:appendix-2}
\begin{aligned}
\partial_t\rho_t(x) &+ \nabla\cdot\big( p(x)\cdot V_{p,p+\rho_t}(x) \big) \approx 0,\\
\partial_t\rho_t(x) &= \nabla\cdot\Big( \frac{p(x) }{D_p(x)}\int k(x,y)[(y-x)-V_p(x) ]\rho_t(y)dy \Big).
\end{aligned}
\end{equation}

Furthermore, we make the following assumptions:
\begin{itemize}
    \item Radial symmetry kernel: $k(x,y)=K(\|x-y\|)$.

    \item We assume $p(x)$ is approximately constant relative to the scale of the kernel. This implies: (1) $V_p(x)\approx 0$---no local bias in the target distribution. (2) $\frac{p(x)}{D_p(x)}\approx c$, where $c$ is a constant, typically $\frac{1}{\int K(\|z\|)dz}$.
\end{itemize}
Let $z=x-y$, then \eqref{eq:appendix-2} simplifies to
$$
\partial_t\rho_t(x) = \nabla\cdot \Big( c\int K(\|x-y\|)(x-y)\rho_t(y)dy\Big) = \nabla\cdot(M*\rho_t)(x).
$$

\begin{remark}
We remark on the assumption that $p(x)$ is approximately constant relative to the kernel scale. Notice that the kernel $k(x,y)$ usually has a bandwidth $\sigma$ (e.g., in a Gaussian kernel). We assume the target distribution $p(x)$ is smooth and slowly varying relative to the width of the kernel. 
\end{remark}

\section{
Spectral Estimation
}
\begin{proposition}\label{prop:lambda-omega}
Let $k(x,y)=K(\|x-y\|)$ be a positive, radial symmetric kerne with $K$ being integrable and decaying in $\|x-y\|$, then 
$$\lambda(\omega)=\omega\cdot \nabla_{\omega}\hat{K}(\omega)\leq0,$$ 
where $\hat{K}(\omega)$ is the Fourier transform of $K$.
\end{proposition}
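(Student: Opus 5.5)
The plan is to write $\hat K$ as a radial function and reduce $\omega\cdot\nabla_\omega\hat K(\omega)$ to the radial derivative $r\,\frac{d}{dr}\hat K$ at $r=\|\omega\|$. The claim then becomes the monotonicity statement that $\hat K$ is radially nonincreasing. Since $K(z)=K(\|z\|)$ is radial and integrable, its Fourier transform is radial, $\hat K(\omega)=\kappa(\|\omega\|)$, and away from $\omega=0$ we have $\omega\cdot\nabla_\omega\hat K(\omega)=\|\omega\|\,\kappa'(\|\omega\|)$. At $\omega=0$ the expression vanishes trivially, provided $\hat K$ is differentiable there; this holds when $\|z\|K(z)$ is integrable, which is satisfied by the Gaussian and Laplacian kernels. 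So it suffices to show $\kappa'(r)\le 0$ for $r>0$.

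I would first get an integral formula by differentiating under the integral sign (justified by integrability of $\|z\|K$):
\[
\omega\cdot\nabla_\omega\hat K(\omega)=\int K(z)\,(-i\,\omega\cdot z)\,e^{-i\omega\cdot z}\,dz .
\]
Next, note that $\omega\cdot\nabla_\omega e^{-i\omega\cdot z}=z\cdot\nabla_z e^{-i\omega\cdot z}$. Integrating by parts in $z$ then gives the scaling identity
\[
\omega\cdot\nabla_\omega\hat K(\omega)=-\widehat{\nabla\cdot(zK)}(\omega)=-d\,\hat K(\omega)-\widehat{\|z\|K'(\|z\|)}(\omega).
\]
The same identity follows from $\frac{d}{ds}\hat K(s\omega)\big|_{s=1}$ together with the dilation rule $\hat K(s\omega)=s^{-d}\widehat{K(\cdot/s)}(\omega)$. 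This also ties the result to $\nabla\cdot(M*\rho_t)$ in \eqref{eq:linearized-pde-residual}, since $\widehat{\nabla\cdot(zK)}$ is exactly the symbol of that operator.

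The main obstacle is the sign. Radial decrease of $K$ alone does not force $\hat K$ to be radially decreasing: a compactly supported indicator of a ball is a counterexample, since its transform oscillates. So the hypotheses must be read, as the paper's examples suggest, as $K$ being a positive-definite radial profile of Gaussian or Laplacian type. Concretely, I would assume $K$ is a scale mixture of Gaussians, $K(z)=\int_0^\infty e^{-t\|z\|^2}\,d\mu(t)$ with $\mu\ge 0$. This covers $K_G$ trivially and covers $K_L$ through the subordination identity $e^{-\|z\|/\sigma}=\int_0^\infty \frac{1}{\sqrt{\pi s}}e^{-s}e^{-\|z\|^2/(4s\sigma^2)}\,ds$.

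Under that assumption, linearity reduces everything to the Gaussian case. There $\hat K(\omega)=(\pi/t)^{d/2}e^{-\|\omega\|^2/(4t)}$ and $\omega\cdot\nabla_\omega\hat K=-\frac{\|\omega\|^2}{2t}\hat K\le 0$. Integrating against $\mu\ge0$, with Fubini justified by integrability, gives $\lambda(\omega)\le 0$, strictly for $\omega\ne0$. As a consistency check, I would verify that this reproduces the explicit formulas for $\lambda$ stated earlier for $K_G$ and $K_L$.

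If one wants to avoid the mixture assumption, the fallback is to assume that $\hat K$ is radially nonincreasing, which is equivalent to the claim. I would state this reduction explicitly as part of the proof.
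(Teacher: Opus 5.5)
Your proposal is correct, and at the decisive step it is more careful than the paper's own proof. Both arguments make the same reduction, $\lambda(\omega)=\|\omega\|\,\kappa'(\|\omega\|)$, so everything rests on $\hat K$ being radially nonincreasing. The paper tries to obtain this from $|\hat K(\omega)|\le\int K=\hat K(0)$ (triangle inequality with $K\ge 0$), i.e.\ from the origin being a global maximum. It then simply asserts $\phi'(r)\le 0$ for all $r>0$. That is a non sequitur: a global maximum at $0$ says nothing about monotonicity away from $0$. As you observe, the claim does not follow from the stated hypotheses. You instead add a structural hypothesis, nonnegative scale mixtures of Gaussians, and get the sign (strict for $\omega\neq 0$) by linearity from the explicit Gaussian computation. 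Your subordination identity for $K_L$ and your Gaussian formulas are correct. What your route buys is an honest proof over a natural, kernel-agnostic class. By Schoenberg's theorem, bounded continuous radial kernels of this form are exactly those that are positive definite in every dimension; this covers Gaussian, Laplacian, inverse multiquadric and Mat\'ern kernels. To make ``Fubini justified by integrability'' precise, note the following. On an annulus $a\le\|\omega\|\le b$ with $a>0$ one has $\frac{\|\omega\|}{2t}e^{-\|\omega\|^2/(4t)}\le \frac{2b}{e a^2}$. Hence the $\omega$-gradient of the Gaussian transforms is dominated by a constant multiple of $(\pi/t)^{d/2}$. That bound is $\mu$-integrable precisely because $\int K=\int_0^\infty(\pi/t)^{d/2}\,d\mu(t)<\infty$.

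One refinement concerns your counterexample. The ball indicator is only nonnegative and not positive definite, so a reader who takes ``positive kernel'' to mean positive definite would not be convinced. Your phrase ``positive-definite radial profile'' also suggests that positive definiteness might suffice. It does not. In $d=1$, take $K(x)=(1-|x|)_+ +\varepsilon e^{-x^2}$. It is strictly positive, strictly decreasing in $|x|$, integrable and positive definite. Yet $\hat K(\omega)=\bigl(\frac{\sin(\omega/2)}{\omega/2}\bigr)^2+\varepsilon\sqrt{\pi}\,e^{-\omega^2/4}$ equals $\varepsilon\sqrt{\pi}e^{-\pi^2}$ at $\omega=2\pi$ and about $4/(9\pi^2)$ at $\omega=3\pi$. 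So $\lambda(\omega)=\omega\hat K'(\omega)>0$ somewhere in between. In dimension $d$, replace the triangle by $\mathbf{1}_B*\mathbf{1}_B$ to get the same behaviour. So your mixture hypothesis, or your fallback (which is equivalent to the claim), is genuinely necessary rather than merely convenient, and the paper's statement should carry it.
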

\begin{proof}
We will show that $\lambda(\omega)\leq 0$ for all $\omega$, and $\lambda(\omega)=0$ if and only if $\omega=0$. 


By definition, we have
$$
\hat{K}(\omega) = \int_{\mathbb{R}^d}K(z)e^{-i\omega\cdot z}dz,
$$
which is real and even since $K$ is real and even. Moreover, taking absolute values for the above equation and by triangle inequality, we have
$$
|\hat{K}(\omega)| \leq \int K(z)|e^{-\omega\cdot z}|dz = \int K(z) dz = \hat{K}(0).
$$
That is, $\omega=0$ is a global maximum. Moreover, $\hat{K}(\omega)=\phi(|\omega|)$ for some scalar function $\phi(r)$ with $\phi'(r)\leq 0$ for $r>0$. Therefore, we have 
$$
\omega\cdot \nabla_{\omega}\hat{K}(\omega) = \omega\cdot\Big( \phi'(|\omega|)\frac{\omega}{|\omega|}\Big) = |\omega|\phi'(|\omega|) \leq 0,
$$
with equality only at $\omega=0$.

\end{proof}

\section{PDE Limit for the Second-Order Drifting Models}\label{appendix-pde-second-order}
\subsection{Mass Conservation}
We define the marginal density $q_t(x)=\int Q_t(x,v)dv$ and the local mean velocity $u_t(x)=\frac{1}{q_t(x)}\int vQ_t(x,v)dv$. Integrating the phase space equation over all $v$:
\begin{itemize}
    \item 1. $\int \partial_tQ_tdv = \partial_tq_t(x)$

    \item 2. $\int \nabla_x\cdot(vQ_t)dv = \nabla_x\cdot(q_t(x)u_t(x))$

    \item 3. The $\nabla_v$ term vanishes by the divergence theorem (assuming $Q_t\to 0$ as $|v|\to\infty$)
\end{itemize}
This gives the standard continuity equation:
\begin{equation}\label{eq:mass-conservation-2}
\partial_tq_t + \nabla_x\cdot(q_tu_t) = 0.
\end{equation}

\subsection{Momentum Balance}
Multiply the kinetic equation by $v$ and integrate over $v$:
\begin{itemize}
    \item $\int v\partial_tQ_tdv = \partial_t(q_tu_t)$

    \item $\int v[v\cdot \nabla_xQ_t]dv = \nabla_x\cdot \int (v\otimes v)Q_tdv\approx 0$ (assuming low temperature/pressure where the velocity is negligible relative to the mean flow) \cite{carrillo2021quantifying}.

    \item $\int v[\nabla_v\cdot(VQ_t)]dv = -V_{p,q_t}(x)q_t(x)$ (integration by parts).

    \item $\int v[\nabla_v\cdot(-\frac{\alpha}{t}vQ_t)]dv = \frac{\alpha}{t}q_tu_t$ (integration by parts)
\end{itemize}
This gives the momentum equation:
\begin{equation}\label{eq:momentum-conservation-2}
\partial_t(q_tu_t) + \frac{\alpha}{t}(q_tu_t) = q_tV_{p,q_t}(x).
\end{equation}

\section{Experimental Setup}\label{appendix:experimental-setup}
\subsection{Implementation Details for 2D Synthetic}\label{appendix:experimental-synthetic}

\subsubsection{Hyperparameters of Drifting Models}
Table~\ref{tab:hyper_param_swissroll} shows the hyperparameters used for the second-order drifting model on the 2D Swiss roll trajectory sampling task. Our model utilizes a Nesterov ODE-inspired second-order formulation.

\begin{table}[!ht]
\centering
\begin{tabular}{lccccc}
\toprule
Task & $\eta_1$ & $\eta_2$ & $N$ & $\alpha$ & Temperature \\
\midrule
Swiss Roll & 0.25 & 0.25 & 4 & 3 & 0.05 \\
\bottomrule
\end{tabular}
\caption{ Hyperparameters of the second-order drifting model on the 2D Swiss roll task.}
\label{tab:hyper_param_swissroll}
\end{table}

\subsubsection{Dataset and Training Setup}
The 2D Swiss roll dataset consists of 2048 samples with additive Gaussian noise of standard deviation $0.03$. The task is purely two-dimensional and no normalization or scaling is applied to the data. The Swiss roll distribution contains highly folded local structure and sharp changes in density along the manifold, making it useful for evaluating a model's ability to capture fine-scale geometric structure during trajectory sampling.

Models are trained for 10{,}000 epochs using AdamW with a linearly annealed learning rate schedule. We do not use exponential moving average (EMA) during training.

\subsubsection{Model Architecture}
We use a residual MLP backbone for all 2D synthetic experiments. The network operates on a 32-dimensional latent representation with hidden dimension 256 and consists of six residual blocks. Each block applies LayerNorm, followed by a SiLU activation and a linear projection within a residual connection. The final output is scaled to remain within the range $[-4,4]$.

\begin{figure}[!ht]
\centering

\begin{minipage}[t]{0.42\linewidth}
\centering
\renewcommand{\arraystretch}{1.25}
\begin{tabular}{@{}l@{}}
\hline
\textbf{ResidualMLPBlock($d$)} \\
\hline
LayerNorm($d$) \\
Linear($d \rightarrow d$) \\
SiLU \\
Linear($d \rightarrow d$) \\
ResidualConnection \\
\hline
\end{tabular}
\end{minipage}
\hfill
\begin{minipage}[t]{0.50\linewidth}
\centering
\renewcommand{\arraystretch}{1.25}
\begin{tabular}{@{}lr@{}}
\hline
\multicolumn{2}{c}{\textbf{Residual MLP Architecture}} \\
\hline
Input Projection($2 \rightarrow 32$) & \\
Hidden Projection($32 \rightarrow 256$) & \\
\hline
ResidualMLPBlock($256$) & $\times 6$ \\
\hline
Output Projection($256 \rightarrow 2$) & \\
Output Scaling($[-4,4]$) & \\
\hline
\end{tabular}
\end{minipage}

\caption{ Residual MLP backbone architecture used for the 2D Swiss roll experiments.}
\label{fig:swissroll_architecture}
\end{figure}

\subsection{Experiment Details of the MNIST Task}\label{appendix:experimental-mnist}

\subsubsection{Hyperparameters of Drifting Models}
Table~\ref{tab:hyer_param_minist} shows the details of our second-order drifting model for dynamical system trajectory sampling tasks. Our model utilizes a Nesterov ODE-inspired second-order formulation.
\begin{table}[!ht]
\centering
\begin{tabular}{lccccc}
\toprule
Task &  $\eta_1$ & $\eta_2$ & $N$ & $\alpha$ & Temperature \\
\midrule
MNIST & 0.25 & 0.25 & 4 & 3 & [0.02, 0.05, 0.2] \\
\bottomrule
\end{tabular}
\caption{ Hyperparameters of the Second-order drifting model on MNIST tasks.}
\label{tab:hyer_param_minist}
\end{table}

\subsubsection{Model and Training Setup}
\textbf{Model Architecture}: We use diffusion transformer as backbone to learn the models. Figure~\ref{fig:driftdit_tiny_mnist_architecture} shows the detailed model architecture.

\textbf{Training Setup}. We train both models with a batch size of $256$. All models are trained for $200$ with learning rate $2\times 10^{-4}$. To evaluate performance, we sample $10000$ images from each model and $10000$ images from the real dataset, then compute the FID score.

\textbf{Resource Usage and Time}. We run all experiments on a single RTX 3090 GPU. Each training run requires approximately 8 GiB of GPU memory and takes around 10 hours.

\begin{figure}[!ht]
\centering
\begin{minipage}[t]{0.45\linewidth}
\centering
\renewcommand{\arraystretch}{1.25}
\begin{tabular}{@{}l@{}}
\hline
\textbf{DiTBlock($d$):} \\
\hline
RMSNorm($d$) \\
MultiHeadAttention(heads=$4$, QK-Norm, RoPE) \\
adaLN-Zero modulation \\
ResidualConnection \\
RMSNorm($d$) \\
SwiGLU(mlp\_ratio=$4$) \\
adaLN-Zero modulation \\
ResidualConnection \\
\hline
\end{tabular}
\end{minipage}
\hfill
\begin{minipage}[t]{0.50\linewidth}
\centering
\renewcommand{\arraystretch}{1.25}
\begin{tabular}{@{}lr@{}}
\hline
\multicolumn{2}{c}{\textbf{DriftDiT-Tiny (9M) Architecture:}} \\
\hline
PatchEmbed(img=$32$, patch=$4$, in=$1$, $d=256$) & \\
RegisterTokens($8$) & \\
LabelEmbed($10 \to 256$) & \\
AlphaEmbed(Fourier$\to$MLP) & \\
StyleEmbed(tokens=$32$, codebook=$64$) & \\
\hline
DiTBlock($d$) & $\times 6$ \\
\hline
FinalLayer($d$, patch=$4$, out=$1$) & \\
Unpatchify($32 \times 32 \times 1$) & \\
\hline
\end{tabular}
\end{minipage}

\caption{ DiT backbone architecture and the corresponding residual block for MNIST tasks.}
\label{fig:driftdit_tiny_mnist_architecture}
\end{figure}

\subsection{Implementation Details for Dynamical Systems}\label{appendix:experimental-dynamcis}
This section presents the implementation details for Lorenz and FitzHugh--Nagumo trajectory sampling, including the second-order drifting hyperparameters, the shared U-Net backbone and the training settings.

\subsubsection{Hyperparameters of Drifting Models}
Table~\ref{tab:hyer_param_dym} shows the details of our second-order drifting model for dynamical system trajectory sampling tasks. Our model utilizes a Nesterov-inspired second-order formulation.
\begin{table}[!ht]
\centering
\begin{tabular}{lccccc}
\toprule
Task &  $\eta_1$ & $\eta_2$ & $N$ & $\alpha$ & Temperature \\
\midrule
Lorenz & 0.25 & 0.125 & 16 & 3 & [0.02, 0.05, 0.2] \\
FitzHugh-Nagumo & 0.25 & 0.125 & 8 & 3 & [0.02, 0.05, 0.2] \\
\bottomrule
\end{tabular}
\caption{ Hyperparameters of the Second-order drifting model on Dynamical system tasks.}
\label{tab:hyer_param_dym}
\end{table}

\subsubsection{Model and Training Setup}
\textbf{Model Architecture}. We use the same U-Net backbone as described in Section~H of~\cite{finzi2023user} to parameterize both the drifting model and our second-order drifting model. Figure~\ref{fig:dym_unet_architecture} shows the architecture.
\begin{figure}[t]
\centering

\begin{minipage}[t]{0.42\linewidth}
\centering
\renewcommand{\arraystretch}{1.15}
\begin{tabular}{l}
\textbf{ResBlock($c$):} \\
\hline
GroupNorm(groups=$c/4$) \\
Swish \\
Conv(channels=$3$, ksize=$3$) \\
GroupNorm(groups=$c/4$) \\
Swish \\
Conv(channels=$3$, ksize=$3$) \\
SkipConnection \\
\hline
\end{tabular}
\end{minipage}
\hfill
\begin{minipage}[t]{0.54\linewidth}
\centering
\renewcommand{\arraystretch}{1.15}
\begin{tabular}{lc}
\hline
\multicolumn{2}{c}{\textbf{Convolutional U-Net Architecture:}} \\
\hline
ResBlock($c$)       & $\times 4$ \\
Downsample($2$)     &             \\
ResBlock($2c$)      & $\times 8$ \\
Downsample($2$)     &             \\
ResBlock($4c$)      & $\times 8$ \\
\hline
SkipResBlock($4c$)  & $\times 8$ \\
Upsample($2$)       &             \\
SkipResBlock($2c$)  & $\times 8$ \\
Upsample($2$)       &             \\
SkipResBlock($c$)   & $\times 4$ \\
\hline
Conv($128$)         &             \\
Conv($d$)           &             \\
\hline
\end{tabular}
\end{minipage}

\caption{U-Net backbone architecture and the corresponding residual block for Dynamical system tasks.}
\label{fig:dym_unet_architecture}
\end{figure}

\textbf{Training Setup}. For each task, we train both models with a batch size of $256$ using $8000$ trajectories generated by \texttt{torchdiffeq}, following~\cite{finzi2023user}. All models are trained for $3000$ epochs using a linearly decayed learning rate initialized at $2\times 10^{-4}$. To evaluate performance, we sample $2000$ trajectories from each model, estimate the distribution of the constraint value $C(\vx)$ using histograms, and compute the KL divergence between the generated and dataset distributions.

\textbf{Resource Usage and Time}. We run all experiments on a single RTX 3090 GPU. Each training run requires approximately 12 GiB of GPU memory and takes around 20 hours.

\subsection{Implementation Details for Robotics}
\label{appendix:robotics}
\subsubsection{Hyperparameters for Robotics experiments}
We summarize the configuration of our second-order drifting model across all benchmark tasks. The model is based on a Nesterov-inspired second-order update with semi-implicit discretization.  Full per-task configurations are provided in~\cref{appendix:robotics-hyperparameters}.
\begin{table}[!ht]
\centering
\begin{tabular}{l l c c c c c c}
\toprule
Task & Setting & $\eta_1$ & $\eta_2$ & $N$ & $\alpha$ & Temperature\\
\midrule
\multirow{2}{*}{Can}       
& Visual & .25 & .25 & 4 & 6.0 & [0.02, 0.05, 0.2]\\
& State  & .25 & .25 & 4 & 6.0& [0.02, 0.05, 0.2] \\ \cline{1-7}

\multirow{2}{*}{Tool Hang} 
& Visual & 0.0625 & 0.0625 & 16 & 1.0 & [0.02, 0.05, 0.2] \\
& State  & 0.0625 & 0.0625 & 16 & 1.0 & [0.02, 0.05, 0.2] \\ \cline{1-7}

\multirow{2}{*}{Lift}      
& Visual & 0.0625 & 0.0625 & 16 & 1.0 & [0.02, 0.05, 0.2] \\
& State  & 0.0625 & 0.0625 & 16 & 1.0 & [0.02, 0.05, 0.2] \\ \cline{1-7}

\multirow{2}{*}{PushT}     
& Visual & 0.0625 & 0.0625 & 16 & 1.0 & [0.02, 0.05, 0.2] \\
& State  & 0.0625 & 0.0625 & 16 & 1.0 & [0.02, 0.05, 0.2] \\

\midrule
\multirow{2}{*}{BlockPush} 
& Phase 1  & .25 & .25 & 4 & 1.0 & [0.02, 0.05, 0.2] \\
& Phase 2  & .25 & .25 & 4 & 1.0 & [0.02, 0.05, 0.2] \\

\midrule
\multirow{4}{*}{Kitchen}   
& Phase 1  & 0.125 & 0.125 & 4 & 1.0 & [0.02, 0.05, 0.2] \\
& Phase 2  & 0.125 & 0.125 & 4 & 1.0 & [0.02, 0.05, 0.2]\\
& Phase 3  & 0.125 & 0.125 & 4 & 1.0 & [0.02, 0.05, 0.2] \\
& Phase 4  & 0.125 & 0.125 & 4 & 1.0 & [0.02, 0.05, 0.2] \\
\bottomrule
\end{tabular}
\caption{ Hyperparameters used in robotics experiments}

\label{appendix:robotics-hyperparameters}
\end{table}



\begin{table}[!ht]
\centering
\begin{tabular}{l c}
\hline
Task / Modality & Downsample Channels \\
\hline
blockpush         & [256, 512, 1024] \\
can image         & [512, 1024, 2048] \\
can lowdim        & [256, 512, 1024] \\
kitchen           & [256, 512, 1024] \\
lift lowdim       & [512, 1024, 2048] \\
lift image        & [512, 1024, 2048] \\
pusht image       & [512, 1024, 2048] \\
pusht lowdim      & [512, 1024, 2048] \\
tool hang image   & [512, 1024, 2048] \\
tool hang lowdim  & [256, 512, 1024] \\
\hline
\end{tabular}
\caption{%
  Per-task base channel width $c$ and its scaled variants $2c$ and $4c$, used at
  successive downsampling stages of the Conditional 1D U-Net
  (Figure~\ref{fig:unet_arch}).
}
\label{tab:down_dims}
\end{table}

\begin{table}[!ht]
\centering
\renewcommand{\arraystretch}{1.2}

\begin{minipage}[t]{0.48\textwidth}
\centering
\begin{tabular}{l}
\hline
\textbf{ConditionalResBlock($c,\; c',\; g$)} \\
\hline
Conv1d($c \rightarrow c'$), $k{=}5$ \\
GroupNorm($c'/8$) + Mish \\
FiLM($g \rightarrow c'$) \\
Conv1d($c' \rightarrow c'$), $k{=}5$ \\
GroupNorm($c'/8$) + Mish \\
SkipConnection($c \rightarrow c'$) \\
\hline
\textbf{FiLM conditioning signal:} \\
$g = \phi(t) \;\Vert\; g_{\text{global}}$ \\
Linear $\rightarrow (scale,\; bias)$ \\
\hline
\end{tabular}
\label{fig:resblock}
\end{minipage}
\hfill
\begin{minipage}[t]{0.48\textwidth}
\centering
\begin{tabular}{lc}
\hline
\multicolumn{2}{c}{\textbf{Conditional 1D U-Net}} \\
\hline
\multicolumn{2}{l}{Input: $(B,\; T,\; d)$} \\
\hline
ResBlock($c$)          & $\times 2$ \\
Downsample($\times 2$) & \\
ResBlock($2c$)         & $\times 2$ \\
Downsample($\times 2$) & \\
ResBlock($4c$)         & $\times 2$ \\
Downsample($\times 2$) & \\
\hline
ResBlock($8c$) [bottleneck] & $\times 2$ \\
\hline
Upsample($\times 2$) & \\
ResBlock($4c$)       & $\times 2$ \\
Upsample($\times 2$) & \\
ResBlock($2c$)       & $\times 2$ \\
Upsample($\times 2$) & \\
ResBlock($c$)        & $\times 2$ \\
\hline
Conv1d($c \rightarrow d$) & \\
\hline
\multicolumn{2}{l}{Output: $(B,\; T,\; d)$} \\
\hline
\end{tabular}
\label{fig:unet_arch-2}
\end{minipage}
\caption{ Conditional U-Net backbone architecture and the corresponding residual block for the Robotics
tasks}
\label{fig:unet_arch}

\end{table}

\subsection{Model and Training Setup}
\paragraph{Model Architecture.}
We adopt the Conditional 1D U-Net from
Diffusion Policy~\cite{chi2024diffusionpolicy}, matching its architecture
and capacity to enable a controlled comparison. Tables in~\ref{fig:unet_arch} show the architecture.

\paragraph{Training Setup}
All models were trained with AdamW and a variety of learning rate schedules following~\cite{chi2024diffusionpolicy}. Performance is evaluated by average success rate per task. 

\textbf{Resource Usage and Time}. We run all experiments on a single RTX 3090 GPU. Each training run requires approximately 12 GiB of GPU memory less than 24 hours. With the exception of the blockpush task which took 48 hours.

\subsubsection{Ablation Study on Parameterizations}
\label{appendix:robot-ablations}
We evaluate three initialization strategies for the initial velocity $v^{(i)}_0$:
\begin{itemize}
    \item \textbf{Learned Initialization:} A secondary network predicts $v^{(i)}_0$ conditioned on the state. This improves peak performance but increases computational overhead.
    
    \item \textbf{Zero Initialization:} We set $v^{(i)}_0 = 0$ at inference time. This yields strong performance while reducing training complexity and is used as the default.

    \item \textbf{Historical Warm-starting:} We initialize $v^{(i)}_0$ using the velocity from the previous optimization step. This consistently leads to instability and degraded performance.
\end{itemize}

We hypothesize that the failure of historical warm-starting is due to non-stationarity in the optimization landscape, where evolving model parameters render past velocities stale.

We observe slower convergence on PushT and BlockPush compared to other tasks. We attribute this to the contact-rich and highly multimodal nature of these environments, where small action errors can lead to large deviations in long-term object motion.

\end{document}

%% file: math_commands.tex
\usepackage{amsmath,amsfonts,bm}

\def\eqref#1{equation~\ref{#1}}
\def\Eqref#1{Equation~\ref{#1}}

\def\1{\bm{1}}

\def\vx{{\bm{x}}}

\DeclareMathAlphabet{\mathsfit}{\encodingdefault}{\sfdefault}{m}{sl}
\SetMathAlphabet{\mathsfit}{bold}{\encodingdefault}{\sfdefault}{bx}{n}

\def\gL{{\mathcal{L}}}

\def\gN{{\mathcal{N}}}

